\newtheorem{theorem}{Theorem}[section]
\newtheorem{lemma}[theorem]{Lemma}
\newtheorem{claim}[theorem]{Claim}
\newtheorem{definition}{Definition}[section]
\begin{document}

\title{Achieving Multi-Tasking Robots in \\ Multi-Robot Tasks}

\title{\LARGE \bf
Achieving Multi-Tasking Robots in  Multi-Robot Tasks
}


\author{Yu Zhang and Winston Smith
\thanks{Yu Zhang and Winston Smith are with the School of Computing, Informatics and Decision Systems Engineering,
        Arizona State University
        {\tt\small \{yzhan442, wtsmith7\}@asu.edu}.}%
}



%

\maketitle
\thispagestyle{empty}
\pagestyle{empty}

\begin{abstract}

One simplifying assumption made in distributed robot systems 
is that the robots are single-tasking: each robot operates on a single task at any time. 
While such a sanguine assumption is innocent to make in situations with sufficient resources so that the robots can operate independently,
it becomes impractical when they must share their capabilities.
In this paper, we consider multi-tasking robots with multi-robot tasks.
Given a set of tasks, each achievable by a coalition of robots,
our approach allows the coalitions to overlap and task synergies to be exploited
by reasoning about the physical constraints that can be synergistically satisfied for achieving the tasks. 
The key contribution of this work 
is a general and flexible framework to achieve this ability
for multi-robot systems 
in resource-constrained situations to extend their capabilities. 
The proposed approach is built on the information invariant theory, 
which specifies the interactions between information requirements. 
In our work, we map physical constraints to information requirements,
thereby allowing task synergies to be identified via the information invariant framework.
We show that our algorithm is sound and complete under a problem setting with multi-tasking robots. 
Simulation results show its effectiveness under resource-constrained situations and in handling challenging situations in a multi-UAV simulator.

\end{abstract}

\section{Introduction}
\label{sec:introduction}

To address a multi-robot task, one simplifying assumption made in the literature
is that the robots are single-tasking. 
This assumption, however, is impractical in situations where the robots must coordinate closely to share their capabilities,
such as when a robot has a capability that is required in multiple tasks.
To handle such resource-constrained situations,
a simple solution is to achieve the tasks sequentially. 
Unfortunately, such a solution, besides having a negative impact on task efficiency, is feasible only when no concurrent execution is required among the tasks for which capabilities must be shared.
As a result, it significantly limits the capabilities of multi-robot systems. 
Instead, in this paper, we consider multi-tasking robots in multi-robot tasks
to enable a robot to operate on multiple tasks at the same time.

While multi-tasking robots are desirable when there are resource contentions,
the fundamental question regarding its feasibility must be carefully considered. In particular, it is affected heavily by the compatibility of the physical constraints to be satisfied for achieving the tasks.\footnote{While there may be other factors that affect the feasibility, such as limitation on the communication bandwidth, our focus is on physical constraints as the influences of other factors are usually less direct or critical.}
For example, for a robot to share its localization capability, 
it must stay within the proximity of the robot that requires its assistance;
for a UAV to share its camera sensor
with a ground vehicle that is assigned to a monitoring task, it must maintain its camera head direction towards the target. 
Hence, the main challenge to enable multi-tasking robots lies in identifying synergies between the underlying physical constraints for the tasks.

The proposed approach is built on the information invariant theory~\cite{donald1995information} that specifies interactions between information requirements. 
In our work, the physical constraints to be satisfied are mapped to information instances,
which are categorized by information types.
Fig. \ref{fig:demo} illustrates a scenario with two tasks:
one of them is a centroid task that requires the three robots to maintain their centroid over the base station,
which is specified as a constraint on the centroid information over the three robots;
the other one is a monitoring task that requires one of the robots to maintain a target within its observation range,
which is specified as a constraint on the relative position information from the monitoring robot to the target. 
The interactions between the constraints can then be described as information interactions, which allows 
us to identify task synergies using the information invariant framework. 
More specifically, given a set of 
physical constraints for the tasks, 
our approach checks if they are compatible 
according to a set of rules that 
govern information interactions. 
A task synergy is identified, for example, when the constraints for different tasks can be satisfied simultaneously even when there are shared resources (i.e., robots) among them.
Fig. \ref{fig:demo} illustrates a synergy between the two tasks.

 \begin{figure}[t!]
    \centering
    \begin{subfigure}
        \centering
        \includegraphics[width=0.41\columnwidth]{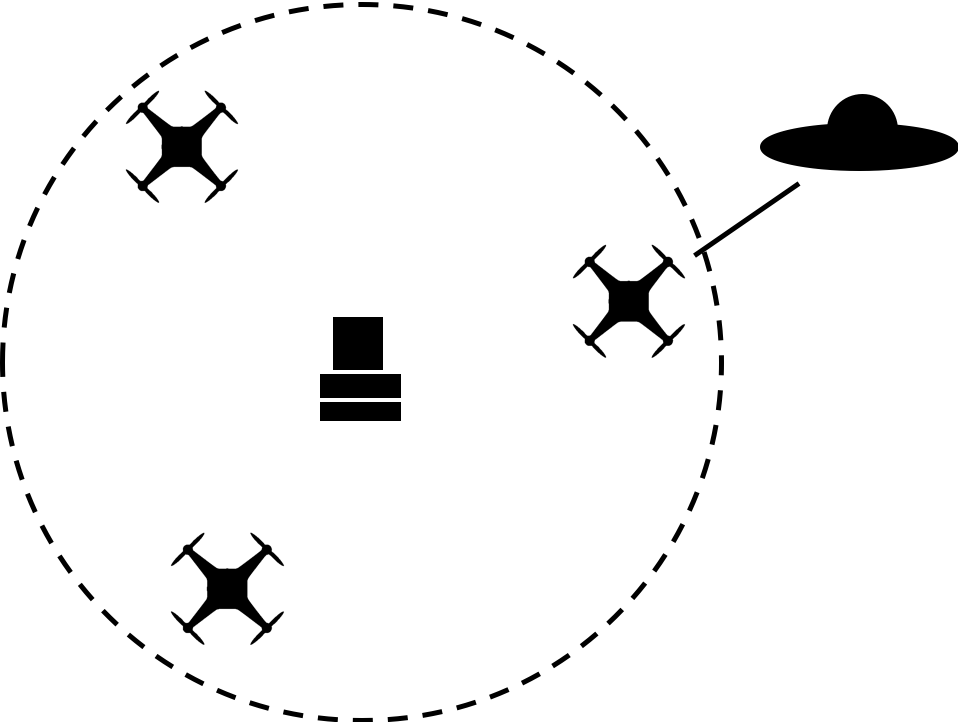}
    \end{subfigure}%
    ~ 
    \begin{subfigure}
        \centering
        \includegraphics[width=0.3\columnwidth]{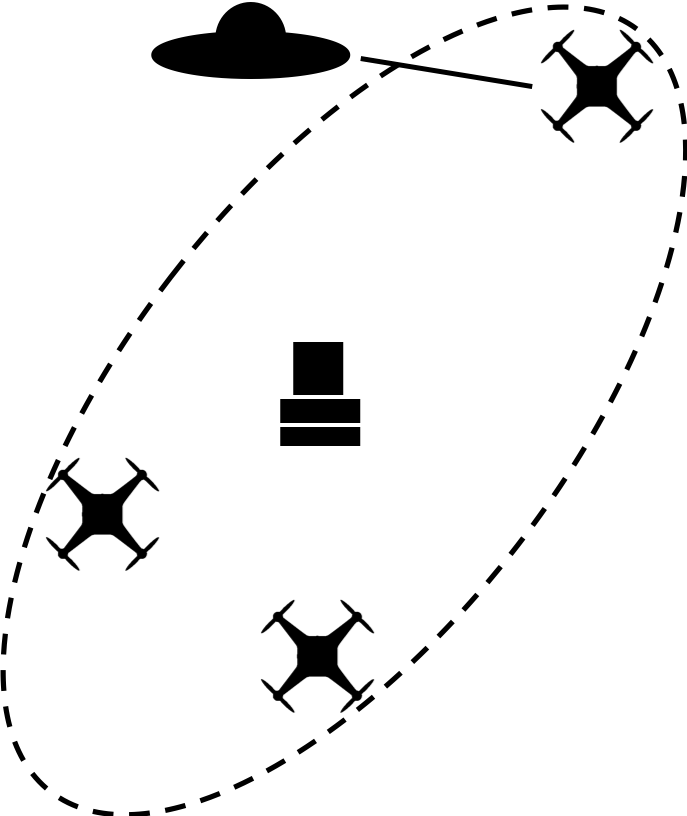}
    \end{subfigure}
    \caption{Scenario that illustrates a synergy between two tasks, where one robot is shared: a centroid task and a monitoring task. The left and right figures show the change of robot configurations as the target changes its position.}
    \vskip-15pt
    \label{fig:demo}
\end{figure}

To the best of our knowledge, our work represents the first general framework for achieving multi-tasking robots in multi-robot tasks~\cite{gerkey}.
It removes a restrictive assumption made in multi-robot systems
by enabling overlapping coalitions.
A formal framework is presented based on the information invariant theory. 
Simulation results show that our approach not only achieves better efficiency but also extends the capabilities of such systems in a multi-UAV simulator,
especially in situations where resources are limited.

\section{Related Work}

Much effort in multi-robot systems has been dedicated to the task allocation problem~\cite{gerkey}. 
When the robots are single-tasking and tasks are single-robot tasks, 
the problem is also known as the assignment problem, 
which has efficient solutions~\cite{kuhn1955hungarian, Liu1}.
Addressing multi-robot tasks, however, significantly increases the problem complexity~\cite{gerkey},
for which there has been a noteworthy amount of focus on approximate solutions~\cite{shehory, vig, sandholm, adams2011coalition, zhang2013considering},
and realistic concerns while implementing task allocation with distributed robot systems~\cite{arkin2, botelho, dias, fua, gerkey2, kalra, parker2, vig, yu2, yu3, zlot}.
As far as we know, there exists little discussion on generalizing the problem
to multi-tasking robots with instantaneous assignments~\cite{gerkey}.\footnote{Note that our work differs from the task scheduling problem (i.e., time extended assignments) where a robot can be assigned to different tasks at different times. Our problem setting requires the tasks to be assigned and executed at the same time.} 
Although the focus here is not on task allocation, 
it is an interesting direction to study how to integrate our method with task allocation algorithms. 

The ability to achieve multi-tasking robots may appear similar to the existing notions of task synergy~\cite{liemhetcharat2014weighted, parker1999cooperative}
 and overlapping coalition structure~\cite{shehory1996formation, dang2006overlapping}.
However, these prior methods concern mainly with the optimization problem of utility maximization with the influence between the assignments given a priori, i.e., how the assignment of a coalition to a task may contribute to the overall utility given the other assignments.
Such influence is often assumed to be captured by a pre-specified heuristic function
that is difficult to compute in real-world settings.
For one, modeling task synergy involves complex reasoning
about the physical constraints as we argued. 
Our study goes beyond prior work by explicitly modeling the influence of overlapping coalitions on task feasibility due to the consideration of physical constraints, 
and thus addresses the fundamental question of whether task synergies are present.  
A similar observation was made in ~\cite{vig} about the impact of physical constraints on task allocation and the issue was addressed by manually specifying a feasibility function.
A general framework that can automatically reason over the  constraint space is missing.

The information invariant theory is introduced to capture the equivalence of sensori-computational systems~\cite{donald1995information}. It has since then been used to develop systems that have demonstrated an impressive level of flexibility~\cite{yu3, tang2005asymtre}. Information invariant is well connected to the notion of information space~\cite{lavalle2006planning},
where both must reason about the relationships between different information requirements. 
The difference being that the latter is often focused on the minimalistic aspect~\cite{tovar2004gap}. 
To the best of our knowledge, however, we are the first to apply the information invariant theory to reason about synergies between multi-robot tasks.

\section{Approach}
\label{sec:overview}

A physical constraint in our work is modeled as an information instance or a combination of information instances,
which are categorized by information types. 
Next, we first formally introduce these two notions. 

\begin{definition}[Information Instance]
    An information instance, denoted as $F(\textbf{E})$, captures the {\textbf{\textit{semantics}}} of information where $F$ is the information type and $\textbf{E}$ is an ordered set of referents.
    \label{def:info-inst}
    \qed
\end{definition}

\begin{table*}
    \renewcommand{\arraystretch}{1.3} 
    \caption{Examples of inference rules used in this work} 
    \centering
    \begin{tabular}{| l | c |}
        \hline
        \multicolumn{1}{| c |} {Rule} & Description \\
        \hline
        $F_{G}(X) + F_{R}(Y, X) \Rightarrow F_{G}(Y)$ & global position of $X$ + relative position of $Y$ to $X$ $\Rightarrow$ global position of $X$\\
        \hline
        $F_{R}(Y, X) \Rightarrow F_{R}(X, Y)$ & relative position of $Y$ to $X$ $\Rightarrow$ relative position of $X$ to $Y$\\
        \hline
        $F_{R}(X, Z) + F_{R}(Y, Z) \Rightarrow F_{R}(X, Y)$  & relative position of $X$ to $Z$ + relative position of $Y$ to $Z$ $\Rightarrow$ relative position of $X$ to $Y$\\
         \hline
   \end{tabular}
    \label{tab:rules}
\end{table*}

Information instances are used to label the actual information. 
In this work, we use capital $F$ to denote information instance and type,
and $f$ to denote the {\textbf{\textit{value}}} of the actual information. 
For example, $F_R(r_1, r_2)$ is used  to refer to ``{\it the relative position between $r_1$ and $r_2$}'',
where the suffix $R$ denotes relative position;
$f_R(r_1, r_2)$ corresponds to a specific value of this information. 
For brevity, we often use $F$ without the referents to denote an information instance. 
Next, we more formally define physical constraint as follows:

\begin{definition}[Physical Constraint]
    A physical constraint is
     a constraint on the value of an information instance $F$. 
    \label{def:info-config}
    \qed
\end{definition}

Notice that the exact value for a constraint may depend on the environment settings and robot configurations dynamically, and hence is not always specified a priori.  
For example, in a tracking task, a constraint specified with respect to the target may be influenced by the environment settings (e.g., whether occlusions are likely to occur). 
This value is assumed to be determined by the execution module (which is not the focus of this work). 
To infer about information invariant, we define
information inference:

\begin{definition}[Inference Rules]
    Given a set of information instances, $S$, and an information instance $F$, 
    an inference rule defines a relationship such that any value set for $S$, i.e., $\{f_1: F_1 \in S\}$, uniquely determines the value of $F$ (i.e., $f$), or written as $S \Rightarrow F$.
    \qed
    \label{def:infer-iis} 
\end{definition}

For example, 
$\{F_R(r_1, r_2), F_G(r_2)\}$ (the relative position between $r_1$ and $r_2$ and the global position of $r_2$) can be used to infer $F_G(r_1)$ (the global position of $r_1$).
See Tab. \ref{tab:rules} for a few more examples.

\subsection{Problem Formulation}
\label{sec:coal-coord}

The setting of multi-tasking robots with multi-robot tasks (with instantaneous assignment) is given as follows:

\begin{definition}[MT-MR Setting]
A MT-MR setting is given by a set $\{S_i\}$,
where $S_i$ corresponds to the set of constraints to be satisfied by the coalition for task $t_i$, where the values of $S_i$ are determined by each coalition. 
\end{definition}

We further stipulate that the task requirements, manifested as physical constraints, are {\it independent} among themselves, which depend only on the environment, task and robot configurations. 
For example, in our motivating example (Fig. \ref{fig:demo}), the constraint introduced by the monitoring task is solely a requirement of the monitoring task, and independent of the centroid task. 
The problem of enabling MT-MR 
thus becomes that of determining whether there exists a physical configuration of the robots that satisfies all these constraints at the same time:

\begin{definition}[Compatibility]
A  set of constraints $S$ is compatible if 
there exists a physical configuration for all the referents in $S$ such that all the constraints are satisfied.
\end{definition}

To reason about compatibility, 
we first consider the inverse when a set of constraints is incompatible. 
Assuming that the physical constraints are independent and based on the definition above, the set of constraints becomes incompatible if two constraints associated with the same information instance are constrained by different values. 
We formally define this intuition in the following claim:

\begin{claim}
Given a MT-MR setting $\{S_i\}$, $\{S_i\}$ is incompatible if no two constraints are restricted by the same information with different values. 
\label{am:comp}
\end{claim}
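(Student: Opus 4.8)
The plan is to read Claim~\ref{am:comp}, together with the paragraph preceding it, as the equivalence that $\{S_i\}$ is \emph{compatible} if and only if no two of its constraints restrict the same information instance to different values, and to prove this equivalence. One direction is immediate. Suppose $c\in S_i$ restricts $F(\textbf{E})$ to a value $f$ and $c'\in S_j$ restricts the same instance $F(\textbf{E})$ to a value $f'\neq f$. By Definition~\ref{def:info-inst} an information instance names a quantity computed from the underlying physical configuration, so any configuration of the referents in $\textbf{E}$ induces exactly one value of $F(\textbf{E})$; hence no configuration satisfies both $c$ and $c'$, and $\{S_i\}$ is incompatible. This is the contrapositive of the non-trivial half of the claim, ``no such clash $\Rightarrow$ compatible,'' which is where the real work lies.

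For that half, set $C=\bigcup_i S_i$ and assume no information instance is restricted to two distinct values anywhere in $C$ --- where, in view of Definition~\ref{def:infer-iis} and Table~\ref{tab:rules}, ``information instance'' must be understood to include every instance \emph{derivable} from $C$ by the inference rules. First I would form the closure $\bar C$: repeatedly, whenever some already-constrained subset $S$ satisfies $S\Rightarrow F$, add to $\bar C$ the constraint that pins $F$ to the value that the values on $S$ determine. By hypothesis $\bar C$ assigns at most one value to each instance. I would then build a satisfying configuration greedily over the referents: order them $r_1,r_2,\dots$, place $r_1$ arbitrarily, and having placed $r_1,\dots,r_k$ choose a position for $r_{k+1}$ meeting every constraint of $\bar C$ whose referents all lie in $\{r_1,\dots,r_{k+1}\}$. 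The independence stipulation is what should make each such step possible: since the task requirements depend only on the environment, task, and robot configurations and not on one another, the constraints acting on $r_{k+1}$ ought not to jointly over-determine its placement, because any such over-determination would, after applying the inference rules, already surface as two values on a single instance of $\bar C$, contradicting single-valuedness. When all referents are placed, every original constraint (being in $C\subseteq\bar C$) is satisfied, so $\{S_i\}$ is compatible.

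The main obstacle is precisely the claim, in the greedy step, that the only obstructions to compatibility are the ``visible'' ones, i.e. two constraints naming a common instance. Hidden obstructions do exist in general: by the third rule of Table~\ref{tab:rules} three pairwise relative positions are not independent, so constraining all three to geometrically inconsistent values blocks every configuration while no two constraints name the same instance. The crux, then, is a lemma asserting that the inference-rule closure \emph{surfaces} every such hidden obstruction as an explicit value clash, so that single-valuedness of $\bar C$ genuinely implies realizability by a physical configuration; I expect this to rely on the specific, geometrically well-behaved information types in play (global position, relative position) rather than on an arbitrary rule set, and to be exactly where the independence stipulation earns its keep. A secondary technical point is to verify that $\bar C$ is well-defined --- that the derivation terminates, or is at least consistent in the limit --- which I would handle by bounding the set of instance types that finitely many referents can generate and checking that the rules introduce nothing outside it.
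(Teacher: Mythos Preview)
You have misread the role of Claim~\ref{am:comp} in the paper. The paper offers no proof of this claim at all: it is stated as an immediate observation, and the surrounding text makes clear that (modulo an apparent typo in the wording) it is meant only as the trivial direction --- if two constraints pin the same information instance to different values, the setting is incompatible. The sentence immediately following the claim says exactly this: ``the above claim provides a sufficient condition for a set of constraints to be incompatible, it is not a necessary condition.'' So the only content of Claim~\ref{am:comp} is your first paragraph, which is correct and matches the paper.

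Your second and third paragraphs attempt to prove the converse (no clash $\Rightarrow$ compatible), but the paper explicitly denies that this follows at the level of Claim~\ref{am:comp}. The ``crux lemma'' you identify --- that every hidden obstruction surfaces as an explicit clash once the inference closure is taken --- is not part of Claim~\ref{am:comp}; it is precisely the content of Theorem~\ref{thm:complete}, which the paper proves only \emph{after} introducing inference closure (Definition~\ref{def:pow-iis}), minimally sufficient inferences (Definition~\ref{def:min-suff}), and, crucially, the restriction to linear information systems (Lemmas~\ref{lm:linear}--\ref{lm:complete}). Your greedy-placement argument would need all of that machinery to go through, and even then the paper does not argue via an explicit construction of a configuration but via the algebraic permutability of linear rules. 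In short: your easy direction is all that Claim~\ref{am:comp} asserts and all the paper provides here; your hard direction belongs to Theorem~\ref{thm:complete}, and attempting it without the linearity hypothesis is exactly the gap the paper spends the rest of Section~\ref{sec:coal-coord} closing.
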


While the above claim provides a sufficient condition for a set of constraints to be incompatible, 
it is not a necessary condition. 
For necessity to hold, however, a process must exist such 
that two constraints restricted by the same information with different values will {\it always} be found whenever a physical configuration does not exist for a given set of constraints. 
This requires us to reason about the {\it equivalences} between constraint systems, or in our case information systems, which are exactly what the information invariant theory~\cite{donald1995information} deals with!
The underlying assumption for realizing this process is the ability to identify all information instances that are relevant  and can be derived (or inferred) from known information in the domain.
To achieve this, we must first assume that a complete set of inference rules is identified in a domain, which is an implicit assumption made throughout the paper.
Next, we incrementally develop a mechanism to implement this process for our problem settings. 
First, we introduce a new concept:
\begin{definition}[Inference Closure]
   Denoting the inference closure of a set of information instances $S$ by $\mathcal{C}(S)$, any $F$ belongs to $\mathcal{C}(S)$ if:
    \begin{enumerate}
    	\item $F \in S$ or
        \item $\mathcal{C}(S) \Rightarrow F$
    \end{enumerate}
    \label{def:pow-iis}
\end{definition}

Note the recursive definition here. 
Given the inference rules in Tab. \ref{tab:rules}, for example,
we can conclude that $\mathcal{C}(\{F_G(r_1)$, $F_G(r_2)\}) 
= \{F_G(r_1)$, $F_G(r_2), F_R(r_1, r_2)$, $F_R$ $(r_2$, $r_1)\}$.
We also refer to any information instance that is in $\mathcal{C}(S)$ 
as {\it ``inferable''} from $S$, 
or that $S$ infers it. 
Based on this definition, $S$ trivially infers any instance already in $S$. 
We use $\rightarrow$ to distinguish it from that used in inference rules.
Note, however, that $\rightarrow$ subsumes $\Rightarrow$.
$\rightarrow$ is clearly transitive by definition.
Next, we more formally define the notion of information inference that links us to inference closure:

\begin{definition}[Information Inference]
    Given a set of information instances, $S$, and an information instance $F$, 
    an information inference defines a relationship such that any value set for $S$, i.e., $\{f_1: F_1 \in S\}$, uniquely determines the value of $F$ (i.e., $f$), or written as $S \rightarrow F$.
    \qed
    \label{def:infer-iis} 
\end{definition}

Note the similarity between inference rule and information inference. 
When $S_1 \leftrightarrow S_2$, we refer to them as being equivalent sets.
Intuitively, information inference enables us to reason about the constraints that are implicitly ``{\it required}'' as a result of the given set of constraints.
We show it more formally next:

\begin{lemma}
Given an information inference in the form of $S \rightarrow F$,
if a set of constraints is defined over $S$, it also introduces a constraint on $F$.
\label{lm:constraints}
\end{lemma}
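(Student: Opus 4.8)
The plan is to exploit the functional dependence that $S \rightarrow F$ encodes. By the definition of information inference, once a value set $v = \{f_1 : F_1 \in S\}$ that is realizable by some physical configuration of the referents is fixed, the value $f$ of $F$ is uniquely pinned down. Hence there is a well-defined map $g$ sending each realizable value set of $S$ to the corresponding value of $F$. A set of constraints defined over $S$ is, by Definition~\ref{def:info-config}, a restriction on the values of the instances in $S$; let $V_S$ denote the collection of realizable value sets that satisfy all of them. The lemma then amounts to showing that the image of $V_S$ under $g$ is a legitimate constraint on $F$ and that it is entailed by the constraints on $S$.

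First I would make $g$ precise: for every value set $v$ arising from a physical configuration of the referents appearing in $S$, the definition of information inference guarantees a unique $f$, so we set $g(v) := f$; this is exactly the content of ``$S \rightarrow F$'' and needs no further argument. Next I would define the induced constraint $C_F := \{\, g(v) : v \in V_S \,\}$, a subset of the value domain of $F$, which by Definition~\ref{def:info-config} is a physical constraint on $F$. Finally I would verify entailment: take any physical configuration of the referents that satisfies the constraints defined over $S$; restricting attention to the referents occurring in $S$ yields a realizable value set $v \in V_S$, and the value of $F$ determined by that same configuration equals $g(v)$ by construction of $g$, hence lies in $C_F$. Thus every configuration respecting the constraints on $S$ also respects $C_F$, so those constraints indeed introduce the constraint $C_F$ on $F$.

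The step I expect to be the main obstacle is the well-definedness of $g$, i.e., ruling out that two physical configurations producing the same value set for $S$ could yield different values of $F$. This is not something to be proved here --- it is precisely the hypothesis $S \rightarrow F$, read through the definition of information inference (``uniquely determines'') --- but the write-up should make explicit that this is where the hypothesis is used, since without it $C_F$ would be the projection of a relation rather than the image of a function. A secondary, cosmetic point is that $C_F$ may coincide with the entire value domain of $F$ in degenerate cases; this does not affect the statement, which only asserts the existence of a well-defined entailed constraint, and one may additionally remark that $C_F$ is in fact the tightest constraint on $F$ guaranteed by the given constraints on $S$.
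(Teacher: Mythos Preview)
Your proposal is correct and follows essentially the same approach as the paper: both use the functional dependence encoded by $S \rightarrow F$ (via Definition~\ref{def:infer-iis}) to conclude that fixing values on $S$ pins down the value of $F$, hence yields a constraint on $F$ in the sense of Definition~\ref{def:info-config}. The paper dispatches this in one sentence as ``follows almost immediately from the definition,'' whereas you spell out the induced map $g$, the image set $C_F$, and the entailment check explicitly; this added rigor is fine but not required here.
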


This follows almost immediately from the definition. 
Given a set of values for $S$, the value of $F$ is determined from Def. \ref{def:infer-iis},
which implies that $F$ is also constrained according to Def. \ref{def:info-config}. 
A derivative of this result is that if $S_1 \rightarrow S_2$, 
a set of constraints on $S_1$ also introduces a set of constraints on $S_2$.

\begin{definition}[Minimally Sufficient Inference]
	$S \rightarrow F$ is a minimally sufficient inference if removing any instance from $S$ would no longer infer $F$.
    \label{def:min-suff}
\end{definition}
Any inference rule is always assumed to be a minimally sufficient inference in this work,
since otherwise the rule can be simplified by removing the instances that are not required. 
We use $\rightarrow^*$ to denote a minimally sufficient inference. 
In the following,
we further simplify our discussion by assuming linear information systems:

\begin{lemma}[Linear Information Systems]
Assuming all inference rules specify linear relationships among the information instances,
any information inference of the form $S \rightarrow F$ also specifies a linear relationship. 
\label{lm:linear}
\end{lemma}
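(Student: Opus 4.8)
The plan is to argue that ``being a linear function of the values of $S$'' is a property closed under exactly the operations that generate $\mathcal{C}(S)$, so that it propagates from $S$ to every instance inferable from it. Fix a set of information instances $S$ and let $\mathcal{L}(S)$ denote the collection of all information instances $F$ such that, over any value set $\{f_i : F_i \in S\}$, the value $f$ of $F$ is a linear function of that tuple. It then suffices to show $\mathcal{C}(S) \subseteq \mathcal{L}(S)$, since by Def.~\ref{def:pow-iis} the statement $S \rightarrow F$ means exactly $F \in \mathcal{C}(S)$.

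I would establish two facts. First, $S \subseteq \mathcal{L}(S)$: for $F \in S$ the value of $F$ is literally a coordinate of the tuple $\{f_i : F_i \in S\}$, and a coordinate projection is linear. Second, $\mathcal{L}(S)$ is closed under the inference rules: if $S' \Rightarrow F$ is an inference rule with $S' \subseteq \mathcal{L}(S)$, then each $F_j \in S'$ has value $f_j = L_j(\{f_i : F_i \in S\})$ for some linear $L_j$ by definition of $\mathcal{L}(S)$, while the rule itself is linear by the hypothesis of the lemma, giving $f = M(\{f_j : F_j \in S'\})$ for a linear $M$; composing, $f = M\big(\{L_j(\{f_i : F_i \in S\}) : F_j \in S'\}\big)$, which is linear since a composition of linear maps is linear, so $F \in \mathcal{L}(S)$. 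These two facts say that $\mathcal{L}(S)$ is closed under clauses (1) and (2) of Def.~\ref{def:pow-iis}, and because $\mathcal{C}(S)$ is the \emph{smallest} such set (the inductive/least-fixpoint reading of that recursive definition), we get $\mathcal{C}(S) \subseteq \mathcal{L}(S)$, i.e., every information inference $S \rightarrow F$ specifies a linear relationship. Equivalently, one may write $\mathcal{C}(S)$ as the limit of the increasing sequence $S = S_0 \subseteq S_1 \subseteq \cdots$ in which $S_{k+1}$ adds all single-rule consequences of $S_k$, and run the same composition argument by induction on the first stage at which $F$ appears.

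The step I expect to need the most care is pinning down what ``$\mathcal{C}(S)$ is the smallest closed set'' means, since Def.~\ref{def:pow-iis} is phrased as a bare recursion whose clause $\mathcal{C}(S) \Rightarrow F$ refers to $\mathcal{C}(S)$ itself; read naively this is circular, and the honest fix is to commit to the least-fixpoint semantics (equivalently the stage construction above), which is precisely what legitimizes the minimality appeal and hence the whole proof. Two minor remarks I would also include: the argument never uses minimal sufficiency of the rules ($\rightarrow^*$), so linearity is inherited by \emph{all} information inferences, not only minimally sufficient ones; and if the rules of Tab.~\ref{tab:rules} are more naturally viewed as affine (e.g.\ $f_R(X,Y) = -\,f_R(Y,X)$ is linear, but a constrained value may amount to a fixed offset), the identical proof goes through verbatim with ``affine'' in place of ``linear'', since affine maps are likewise closed under composition.
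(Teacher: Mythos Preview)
Your proposal is correct and is essentially the same argument as the paper's: both reduce to ``composition of linear maps is linear,'' propagated through the derivation structure of $\mathcal{C}(S)$. The paper phrases it as a sequential back-substitution along a derivation chain $S_1 \Rightarrow F_1,\dots,S_k \Rightarrow F_k = F$ with $S_i \subseteq S \cup \{F_j : j<i\}$, which is exactly the stage-construction variant you mention at the end; your closure-set formulation via $\mathcal{L}(S)$ and least-fixpoint minimality is a cleaner packaging of the same induction, and your care about the recursive definition of $\mathcal{C}(S)$ is well placed.
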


\begin{proof}
Given that $S \rightarrow F$, 
there must exist a set of inference rules that are sequentially applied to derive $F$, 
in the forms of $S_1 \Rightarrow F_1, S_2 \Rightarrow F_2, \dots, S_k \Rightarrow F_k (F)$,
where $S_i \subseteq S \bigcup_{j < i} \{F_j\}$.
Since all the inference rules are assumed to be linear, 
we may replace $F_i$ appearing after the $i$th rule using the $i$th rule for expressing $F_i$, which removes $F_i$ from the equations.
After performing this operation sequentially from $i = 1$ to $k - 1$,
we end up with an expression of $F$ using only $S$. 
\end{proof}
The rules in Tab. \ref{tab:rules} define a linear information system. 

\begin{lemma}[Permutability]
Assuming a linear information system,
any minimally sufficient inference of the form $S \rightarrow F$ is permutable, or in other words it satisfies that $S \cup \{F\} \setminus F_x \rightarrow F_x$, which is also a minimally sufficient inference, for all $F_x \in S$. 
\label{lm:perm}
\end{lemma}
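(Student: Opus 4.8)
The plan is to leverage Lemma~\ref{lm:linear}: since we are in a linear information system, the minimally sufficient inference $S \rightarrow^* F$ can be written as a single linear equation relating the value $f$ to the values $\{f_x : F_x \in S\}$, say $f = \sum_{F_x \in S} a_x f_x$ (possibly with an additive constant, which does not affect the argument). First I would argue that every coefficient $a_x$ is nonzero: if some $a_x$ were zero, then $F$ would be determined by $S \setminus \{F_x\}$ alone, contradicting minimal sufficiency (Def.~\ref{def:min-suff}). Having established $a_x \neq 0$ for all $F_x \in S$, I would solve the linear equation for $f_x$, obtaining $f_x = a_x^{-1}\big(f - \sum_{F_y \in S,\, F_y \neq F_x} a_y f_y\big)$, which exhibits $f_x$ as uniquely determined by the values of $\big(S \cup \{F\}\big) \setminus \{F_x\}$. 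By Def.~\ref{def:infer-iis} this is exactly the statement $S \cup \{F\} \setminus F_x \rightarrow F_x$.

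Next I would verify that this re-derived inference is itself minimally sufficient. This follows from the same coefficient-nonvanishing observation applied in the other direction: in the rearranged equation for $f_x$, the coefficient on $f$ is $a_x^{-1} \neq 0$, and the coefficient on each $f_y$ ($F_y \neq F_x$) is $-a_y a_x^{-1} \neq 0$, so no referent on the right-hand side can be dropped without losing the ability to determine $f_x$. Hence $\big(S \cup \{F\}\big) \setminus \{F_x\} \rightarrow^* F_x$, completing the permutability claim for every $F_x \in S$.

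The one genuine subtlety — and the step I expect to require the most care — is justifying that a linear inference which is \emph{minimally sufficient} really does have all coefficients nonzero, i.e., that ``minimally sufficient'' in the sense of Def.~\ref{def:min-suff} (no instance of $S$ is removable) coincides with ``no zero coefficient'' in the linear expression guaranteed by Lemma~\ref{lm:linear}. The forward direction is what I need: if $a_x = 0$ then $F$ is still inferred after deleting $F_x$ from $S$, contradicting minimality; so minimality forces $a_x \neq 0$. I would also note that the linear expression from Lemma~\ref{lm:linear} might a priori not be unique, but since $S \rightarrow F$ means $f$ is \emph{uniquely} determined by the values in $S$, and the $f_x$ range over an affine space of configurations, the coefficients are in fact pinned down, so there is no ambiguity in speaking of ``the'' coefficient $a_x$. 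With that in hand, the rest is the elementary algebra of solving a linear equation for one of its variables, which I would not belabor.
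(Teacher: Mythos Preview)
Your proposal is correct and follows essentially the same approach as the paper: invoke Lemma~\ref{lm:linear} to obtain a linear expression, argue that minimal sufficiency forces all coefficients to be nonzero, and then solve (``swap'') for $F_x$ to obtain the permuted inference, with nonvanishing coefficients again yielding minimal sufficiency. Your write-up is in fact more careful than the paper's, since you explicitly address the uniqueness of the coefficients and spell out why the rearranged inference is itself minimally sufficient.
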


\begin{proof}
Given that $S \rightarrow^* F$ specifies a linear relationship (Lemma \ref{lm:linear}), the linear expression of $F$ using $S$ as constructed in Lemma \ref{lm:linear} must contain all the instances in $S$ without any coefficients being zero. 
Given a linear relationship, 
we can swap the positions of any $F_x$ and $F$ in the expression, and the result is still a valid linear equation for expressing $F_x$. 
Since $F_x$ expressed by this equation is uniquely determined by $S \cup \{F\} \setminus F_x$ collectively only, 
by Def. \ref{def:min-suff} we have $S \cup \{F\} \setminus F_x \rightarrow^* F_x$.
\end{proof}

\begin{lemma}
Assuming linear information systems and $S_1$ and $S_2$ are two sets of constraints in a MT-MR setting:  if
$S_1 \rightarrow F$ and $S_2 \rightarrow F$ are both minimally sufficient inferences, and $S_1$ and $S_2$ are compatible at the same time, then we must have
$S_1 = S_2$.
\label{lm:complete}
\end{lemma}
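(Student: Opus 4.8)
The plan is to argue by contradiction: assume $S_1 \neq S_2$ and exhibit a value assignment for the constraints under which $S_1$ and $S_2$ cannot be satisfied simultaneously, contradicting their being compatible at the same time. First I would note that, since $S_1 \rightarrow^* F$ and $S_2 \rightarrow^* F$ are both minimally sufficient, neither set can be a proper subset of the other: if, say, $S_1 \subsetneq S_2$, then deleting from $S_2$ any instance of $S_2 \setminus S_1$ still leaves a superset of $S_1$, which already infers $F$ by monotonicity of $\rightarrow$, contradicting Def.~\ref{def:min-suff} for $S_2$. Hence $S_1 \neq S_2$ forces the symmetric difference to be nonempty, and by symmetry of the statement in $S_1,S_2$ we may fix some $F_x \in S_1 \setminus S_2$. (The degenerate situations in which $F$ itself lies in $S_1$ or $S_2$, which by minimal sufficiency force that set to be exactly $\{F\}$, are handled by a direct variant of the same argument below.)

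Next I would invoke Permutability (Lemma~\ref{lm:perm}), valid here because we are in a linear information system (Lemmas~\ref{lm:linear} and~\ref{lm:perm}): from $S_1 \rightarrow^* F$ we obtain $(S_1 \cup \{F\}) \setminus \{F_x\} \rightarrow^* F_x$. Since $S_2 \rightarrow F$ and $\rightarrow$ is transitive, $F$ is inferable from $(S_1 \setminus \{F_x\}) \cup S_2$, hence so is $F_x$; that is, $(S_1 \setminus \{F_x\}) \cup S_2 \rightarrow F_x$. Every instance on the left-hand side belongs to $S_1 \cup S_2$, and crucially $F_x$ is not among them. By Lemma~\ref{lm:constraints}, the constraints attached to the instances in $(S_1 \setminus \{F_x\}) \cup S_2$ — all of which are part of the combined constraint set of $S_1$ and $S_2$ — force a specific value on $F_x$.

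Finally I would use the independence of the task constraints stipulated in the MT-MR setting: the instance $F_x \in S_1$ already carries its own constraint, whose value is selected by the coalition for the task of $S_1$ independently of the coalition for the task of $S_2$. Choosing that value to differ from the value forced through $(S_1 \setminus \{F_x\}) \cup S_2$ produces two constraints on the same information instance $F_x$ with different values, so the combined system is incompatible — contradicting the hypothesis that $S_1$ and $S_2$ are compatible at the same time. Therefore $S_1 = S_2$.

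The step I expect to be the main obstacle is this last one, since it is where the phrase ``compatible at the same time'' must be pinned down: if compatibility is read purely existentially for one fixed assignment of constrained values, the value forced on $F_x$ could accidentally coincide with its own constraint and no contradiction would arise, so the argument genuinely relies on the freedom — guaranteed by the independence assumption and the fact that constraint values are set dynamically by each coalition — to pick a conflicting value, or equivalently on reading compatibility as quantifying over admissible value assignments. A secondary and essentially routine point is checking that after the permutation $F_x$ actually reappears with a nonzero coefficient so that Lemma~\ref{lm:constraints} is applicable; this is precisely what the linear-system hypothesis and the construction in Lemma~\ref{lm:perm} provide, so it should not cause real difficulty.
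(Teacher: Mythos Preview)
Your proposal is correct and follows essentially the same route as the paper: argue by contradiction, use minimal sufficiency to rule out one set being a proper subset of the other, then pick an instance in the symmetric difference and invoke permutability (Lemma~\ref{lm:perm}) together with the MT-MR independence assumption to produce conflicting values. The only cosmetic difference is that the paper picks $F_{2,\neg 1}\in S_2\setminus S_1$ and varies it to change the value of $F$ (which $S_1$ has already fixed), whereas you pick $F_x\in S_1\setminus S_2$ and show that $(S_1\setminus\{F_x\})\cup S_2$ forces $F_x$ to a value that can be made to clash with its own constraint; these are dual formulations of the same argument.
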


\begin{proof}
We prove it by contradiction. Suppose that $S_1$ and $S_2$ are compatible at the same time and $S_1 \not= S_2$. There must exist
a set of values for $S_1$, which  corresponds to a physical configuration of the referents that also satisfies $S_2$. 
Given that $S_1 \rightarrow F$, we know that $S_1$ introduces a constraint on $F$ given Lemma \ref{lm:constraints}.
Hence, to ensure that $S_2$ is compatible, $S_2$ must compute the same value $f$ for $F$. 

If $S_1$ and $S_2$ are the same, the conclusion trivially holds. 
Otherwise, 
when $S_1 \supset S_2$, it results in a contradiction given that both $S_1 \rightarrow F$ and $S_2 \rightarrow F$ are minimally sufficient.
Otherwise, $S_2$ must contain at least one instance $F_{2, \neg 1}$ that is not present in $S_1$. 
In such a case, our problem setting (MT-MR) has the flexibility to set the value in $S_2$ independently of $S_1$.
Since updating the value of $F_{2, \neg 1}$ will change the value of $F$ given Lemma \ref{lm:perm}, it leads to a contradiction that $S_1$ and $S_2$ must compute the same value for $F$.
\end{proof}

\begin{figure}
    \centering
    \includegraphics[width=0.8\columnwidth]{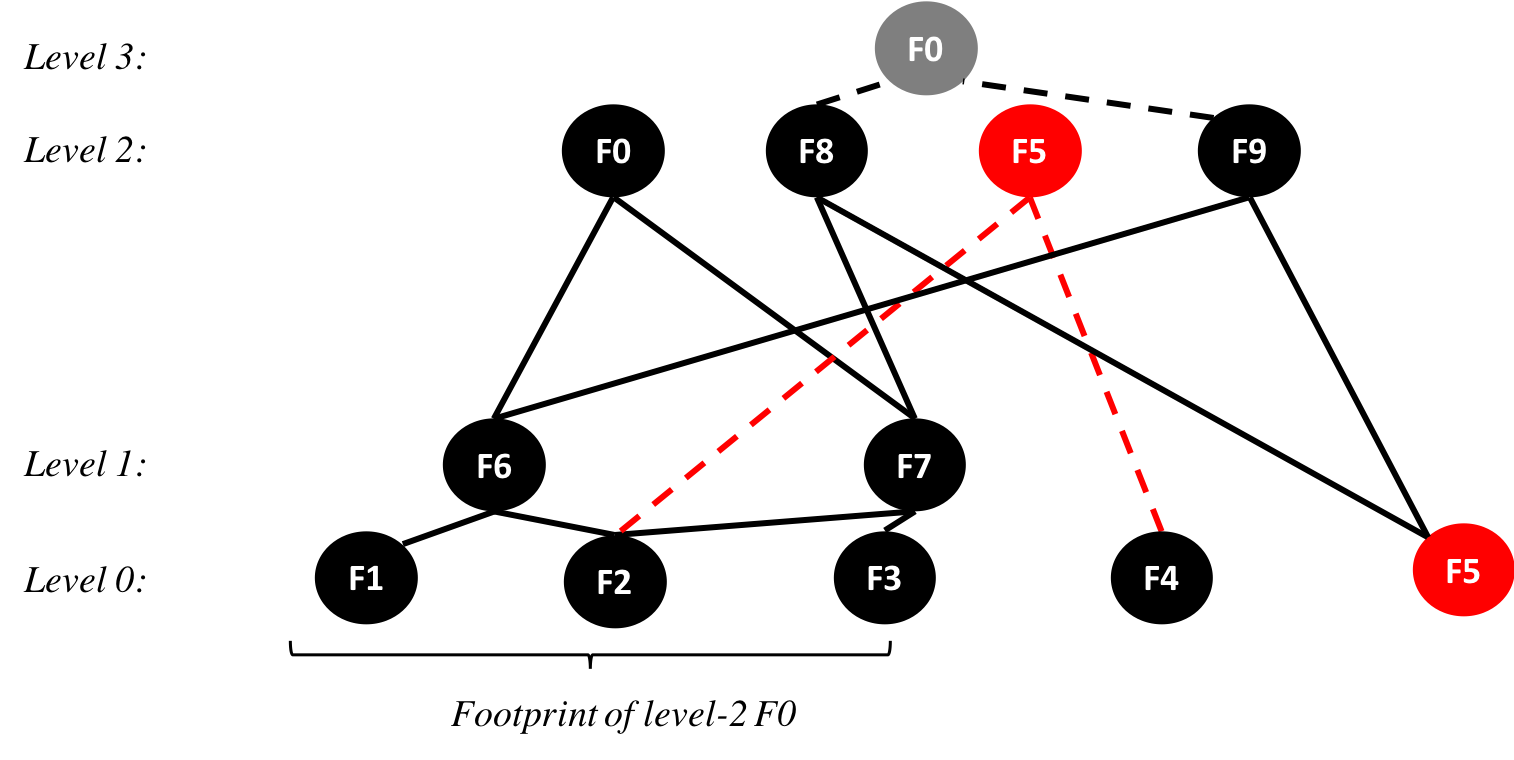}
    \caption{Illustration of the multi-level graphical structure constructed (from the bottom up) to determine whether a MT-MR setting is compatible. The nodes are labeled by information instances. It shows two cases where a duplicate node is found: one for $F_5$ and one for $F_0$. The $F_0$ at level $3$ is not added and it does not lead to incompatibility. The $F_5$ at level $2$ however leads to incompatibility. In the actual implementation, our algorithm would already stop at level $2$ after the incompatibility is detected.}
    \vskip-15pt
    \label{fig:layer}
\end{figure}

Finally, we state the main theorem in the following:

\begin{theorem}
Assuming linear information systems:
given a MT-MR setting $\{S_i\}$ with non-overlapping $S_i$'s,
the following is a {\textbf{necessary and sufficient}} condition for $\{S_i\}$ to be incompatible:
\begin{equation}
 S_1 \cap S_2 \not\rightarrow F,  S_1 \rightarrow F, S_2 \rightarrow F \text{ where } S_1, S_2 \subseteq \bigcup_i S_i
\label{eqn:complete}
\end{equation}
\label{thm:complete}
\end{theorem}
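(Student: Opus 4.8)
The plan is to establish the two implications directly and separately. For sufficiency --- that the existence of $S_1, S_2 \subseteq \bigcup_i S_i$ and an instance $F$ with $S_1 \rightarrow F$, $S_2 \rightarrow F$, and $S_1 \cap S_2 \not\rightarrow F$ makes $\{S_i\}$ incompatible --- I would reduce the witness to minimally sufficient inferences and then invoke Lemma~\ref{lm:complete}. For necessity --- that an incompatible $\{S_i\}$ always exhibits such a triple --- I would model the conjunction of all the constraints as a linear system (legitimate by Lemma~\ref{lm:linear}) and read the triple off a vector certifying infeasibility; the layered construction of Fig.~\ref{fig:layer} is then exactly the algorithm that searches for this triple.

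For sufficiency, suppose such $S_1, S_2, F$ exist. First I would shrink $S_1$ and $S_2$ to minimally sufficient sub-inferences $\hat S_1 \subseteq S_1$ and $\hat S_2 \subseteq S_2$ with $\hat S_1 \rightarrow^{*} F$ and $\hat S_2 \rightarrow^{*} F$ (repeatedly discard a removable instance; both sets stay inside $\bigcup_i S_i$). Since $\hat S_1 \cap \hat S_2 \subseteq S_1 \cap S_2$ and $\rightarrow$ is monotone in its left argument (immediate from Def.~\ref{def:pow-iis}), we get $\hat S_1 \cap \hat S_2 \not\rightarrow F$; in particular $\hat S_1 \neq \hat S_2$, since otherwise $\hat S_1 \cap \hat S_2 = \hat S_1 \rightarrow F$. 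The contrapositive of Lemma~\ref{lm:complete} then yields that $\hat S_1$ and $\hat S_2$ cannot be satisfied by a single physical configuration; as any configuration witnessing compatibility of $\{S_i\}$ restricts to one jointly satisfying $\hat S_1 \cup \hat S_2 \subseteq \bigcup_i S_i$, the setting $\{S_i\}$ is incompatible. The one subtlety here is that applying Lemma~\ref{lm:complete} relies on its ``flexibility to set values independently,'' which is precisely where the non-overlap of the $S_i$'s is used.

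For necessity, assume $\{S_i\}$ is incompatible and put $U = \bigcup_i S_i$. By Lemma~\ref{lm:linear} the constraints in $U$ amount to a linear system $A x = b$ with one row per constraint, and incompatibility says it is unsolvable, so there is a vector $y$ with $y^{T} A = 0$ and $y^{T} b \neq 0$. Non-overlap means each information instance labels at most one row, so (after dropping rows with $y_j = 0$) the support $J \subseteq U$ of $y$ is a set of $\ge 2$ distinct instances with nonzero coefficients, and the identity $\sum_{F_j \in J} y_j f_j \equiv 0$ shows, for any fixed $F_{j_0} \in J$, that $S_1 := J \setminus \{F_{j_0}\}$ infers $F_{j_0}$, i.e.\ $S_1 \rightarrow F_{j_0}$, while trivially $S_2 := \{F_{j_0}\} \rightarrow F_{j_0}$ and $S_1 \cap S_2 = \emptyset$. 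Picking $F_{j_0}$ to be any instance of $J$ that is not a universal constant --- one exists given that each task is individually feasible, since otherwise $y^{T} b$ would be forced to vanish --- gives $\emptyset \not\rightarrow F_{j_0}$, so $(S_1, S_2, F_{j_0})$ witnesses condition~\eqref{eqn:complete}. An equivalent route is to contrapose through Fig.~\ref{fig:layer}: negating~\eqref{eqn:complete} endows every inferable instance $F$ with a unique minimal witness $B(F) = \bigcap\{S' \subseteq U : S' \rightarrow F\}$, hence a single well-defined value, so the bottom-up construction encounters no value conflict at any revisited node and terminates with a consistent labeling of $\mathcal{C}(U)$, which by the standing completeness assumption on the inference rules is realized by an actual configuration.

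The step I expect to be the main obstacle is the necessity direction at this level of abstraction. The implication ``infeasibility $\Rightarrow$ a dependency whose support yields the triple'' is essentially the left--null-space criterion for an inconsistent linear system, but to deploy it cleanly one must justify the linear model (Lemma~\ref{lm:linear}), use permutability (Lemma~\ref{lm:perm}) to read each coordinate of the dependency as a bona fide inference $S_1 \rightarrow F_{j_0}$, and --- for the constructive form matching Fig.~\ref{fig:layer} --- appeal to the completeness of the rule set so that every such dependency actually surfaces as a revisited node inside $\mathcal{C}(U)$. By comparison, the sufficiency direction and the minimally-sufficient reductions are routine, though the non-overlap bookkeeping in the appeal to Lemma~\ref{lm:complete} deserves an explicit line.
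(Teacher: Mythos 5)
Your proof is correct in the same sense the paper's is, and it splits into a matching half and a genuinely different half. For sufficiency you do exactly what the paper does: shrink $S_1,S_2$ to minimally sufficient cores $\hat S_1 \rightarrow^{*} F$, $\hat S_2 \rightarrow^{*} F$, note $\hat S_1 \neq \hat S_2$ (else monotonicity would give $S_1 \cap S_2 \rightarrow F$), and invoke Lemma~\ref{lm:complete}; your write-up is in fact more explicit than the paper about why equality of the cores contradicts $S_1 \cap S_2 \not\rightarrow F$ and about where the non-overlap assumption (the independent-value flexibility inside Lemma~\ref{lm:complete}) is consumed. For necessity you diverge: the paper argues purely at the level of the inference relation, by contradiction --- if condition~\eqref{eqn:complete} fails yet $\{S_i\}$ is incompatible, Claim~\ref{am:comp} plus the standing completeness-of-rules assumption give two sets $S_a, S_b \subseteq \bigcup_i S_i$ inferring a common $F$ with $S_a \cap S_b \rightarrow F$, so they must agree on $F$'s value, a contradiction --- which is essentially your \emph{second}, contrapositive route through the unique-witness/layered picture of Fig.~\ref{fig:layer}. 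Your \emph{primary} route, reading the triple $(J\setminus\{F_{j_0}\}, \{F_{j_0}\}, F_{j_0})$ off a left-null-space certificate $y^{T}A=0$, $y^{T}b\neq 0$ of an infeasible system, is more constructive and actually exhibits the witness, but be aware it rests on a modeling step the paper never states: that every constrained instance is a linear (affine) function of a common configuration vector $x$. Lemma~\ref{lm:linear} only asserts that derived inferences are linear \emph{in the instance values} when the rules are linear; the configuration-to-value map is left unformalized in the paper, and your appeal to rule completeness (to turn the numerical dependency into $\rightarrow$ read as closure membership) only partially covers this, so the linear observation model should be stated as an explicit assumption. Finally, both your proof and the paper's inherit the same caveat through Lemma~\ref{lm:complete}: ``incompatibility'' is being treated structurally, with constraint values settable independently per coalition, rather than relative to one fixed (possibly consistent) value assignment.
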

\vskip-10pt

\begin{proof}
For sufficiency, we prove it by contradiction. 
In particular, we assume that there exist $S_1$ and $S_2$ that satisfy the above conditions and they are compatible. 
Given that $S_1 \rightarrow F, S_2 \rightarrow F$, 
we know that there exist subsets $S_1^*$ and $S_2^*$ of $S_1$ and $S_2$, respectively, such that $S_1^* \rightarrow^* F, S_2^* \rightarrow^* F$.
From  Lemma \ref{lm:complete}, we know that for them to be compatible (as a result of $S_1$ and $S_2$ being compatible), it must satisfy that $S_1^* = S_2^*$. 
This conflicts with the fact that $S_1 \cap S_2 \not\rightarrow F$.

For necessity, we must prove that the above conditions must hold for all 
situations where $\{S_i\}$ is not compatible. 
Assume a situation where the conditions do {\it not} hold and $\{S_i\}$ is incompatible. 
In such a case, there must exist two different sets $S_a \subseteq \bigcup_i S_i$ and $S_b \subseteq \bigcup_i S_i$ that infer $F$ (given Claim \ref{am:comp} and the assumption about a complete set of inference rules), 
and that  $S_a \cap S_b \rightarrow F$ (given the assumption above).
In which case, however, $S_a$ and $S_b$ must be compatible for $F$, 
resulting in a contradiction. 
\end{proof}

Note that when $S_i$'s overlap, they are trivially incompatible by Claim \ref{am:comp}. 

\subsection{Solution Method}
Brute-forcing the solution is clearly intractable since it requires checking all subset-pairs of $\bigcup_i S_i$, 
which is exponential. Instead, we propose the following procedure based on a directed and multi-level graphical structure constructed from the bottom up:

\begin{figure*}[t!]
    \centering
    \includegraphics[width=0.6\columnwidth]{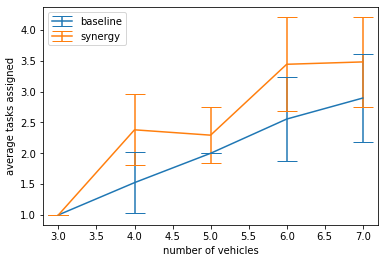}
    \includegraphics[width=0.6\columnwidth]{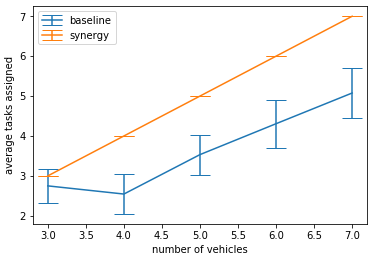}
    \includegraphics[width=0.6\columnwidth]{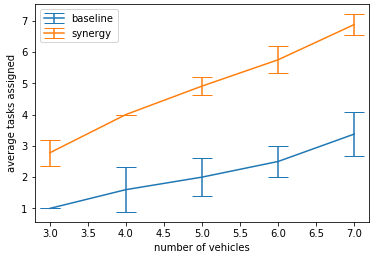}
    \caption{
    Plots of average tasks assigned as number of vehicles increases. Each plot specifies a different test configuration with fewer tasks of both types, more monitoring tasks, and more tasks of both types, with respect to the number of vehicles. The data for each plot was generated from 100 iterations, each given a random number of vehicles from $3$ to $7$.}
    \vskip-10pt
    \label{fig:1}
\end{figure*}

\begin{itemize}
    \item \textbf{\textit{Level 0}}: Make a node for each $F \in \bigcup_i S_i$ as leafs for the structure.
    \item \textbf{\textit{Level i + 1}} $(i \geq 0)$: For all inference rules that can be applied to the nodes at levels $0$ to $i$, make a parent node for each instance $F$ that can be inferred directly based on an inference rule, if this instance does not appear previously in the graph. Otherwise, we compute the intersection of its footprint (all descendant-leaf nodes, see Fig. \ref{fig:layer}) and that of the previous node, to see if it infers $F$. If so, continue with building the graph (without adding the duplicate node); otherwise, return incompatible.
    \item \textbf{\textit{Stopping criteria}}: when no new nodes can be created, return compatible.
\end{itemize}

Fig. \ref{fig:layer} shows an example of the graphical structure constructed to illustrate the compatibility detection process. 

\subsection{Solution Analysis}
To analyze the complexity of the algorithm, we consider the following variables:

\begin{itemize}
    \item number of inference rules, $R$
    \item number of agents, $G$
    \item number of information types, $F$
    \item maximum number of referents in instances, $E$
    \item maximum number of information instances on the left hand side of an inference rule, $N$
\end{itemize}

The maximum number of information instances is bounded by $O(FG^E)$.
At any level $i$, the number of candidate rules to check is bounded by $O(RG^{EN})$. 
Hence, the total computation for constructing the graph is bounded by $O(FRG^{EN})$. 
Hence, the computational complexity is only exponential in two constants (determined by the domain), and otherwise polynomial with respect to the number of agents. 

\begin{theorem}
The solution method  specified above is both sound and complete for detecting incompatibility in a MT-MR setting with linear information systems. 
\end{theorem}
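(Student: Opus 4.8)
The plan is to reduce both directions to Theorem~\ref{thm:complete}, so that the remaining work is purely about the graphical construction of Fig.~\ref{fig:layer}. The first step is to record a \emph{footprint invariant}: whenever a node labeled $F$ is created at some level, the set $L$ of its descendant‑leaf nodes satisfies $L \subseteq \bigcup_i S_i$ and $L \rightarrow F$. This is proved by induction on the creation level --- a leaf gives $L=\{F\}$ and $F\rightarrow F$, and a node built by a rule $S'\Rightarrow F$ inherits, by the induction hypothesis, footprints that infer each premise in $S'$, so the union of those footprints infers $F$ by composing with the rule and using transitivity of $\rightarrow$. Along the way I would also note, via Lemma~\ref{lm:linear} (and the nonzero‑coefficient observation inside the proof of Lemma~\ref{lm:perm}), that every footprint contains a minimally sufficient sub‑inference $L^*\subseteq L$ with $L^*\rightarrow^* F$, which is what lets me match footprints against the subsets $S_1,S_2$ quantified in~\eqref{eqn:complete}.

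For \textbf{soundness}, suppose the procedure outputs incompatible. By construction this happens only when a rule would create a second node for some $F$ whose footprint $L_2$ satisfies $L_1\cap L_2\not\rightarrow F$, where $L_1$ is the footprint of the existing node for $F$. By the footprint invariant, $L_1\rightarrow F$, $L_2\rightarrow F$, and $L_1,L_2\subseteq\bigcup_i S_i$; taking $S_1=L_1$ and $S_2=L_2$ in~\eqref{eqn:complete} and invoking the sufficiency half of Theorem~\ref{thm:complete} shows $\{S_i\}$ is incompatible. The degenerate case where the $S_i$ already overlap is caught immediately at level~$0$, where the duplicate leaf triggers the same test against an empty (or conflicting) footprint, consistent with the remark following Theorem~\ref{thm:complete}.

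For \textbf{completeness} I would argue the contrapositive: if the procedure halts with compatible, then $\{S_i\}$ is compatible. Halting is guaranteed because there are at most $O(FG^E)$ distinct instances and a node is added only for a genuinely new instance, so only finitely many levels are built before the stopping criterion is reached. Returning compatible means that along every rule application the footprint‑intersection test passed. I would then show the construction is \emph{exhaustive}: by repeatedly applying all applicable rules it materializes a node for every member of the inference closure $\mathcal{C}(\bigcup_i S_i)$ (Def.~\ref{def:pow-iis}), and --- crucially, using the permutability of minimally sufficient inferences (Lemma~\ref{lm:perm}) so that the order in which rules fire is immaterial --- every minimally sufficient inference $S^*\rightarrow^* F$ with $S^*\subseteq\bigcup_i S_i$ is witnessed as (a subset of) the footprint of some node‑creation attempt for $F$. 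Combining this with Lemma~\ref{lm:complete} (two minimally sufficient inference sets for the same $F$ that are jointly compatible must coincide) and the monotonicity of $\rightarrow$ under supersets, I get: for any $S_1,S_2\subseteq\bigcup_i S_i$ with $S_1\rightarrow F$ and $S_2\rightarrow F$, shrinking to minimally sufficient sub‑inferences and using the passed tests forces $S_1\cap S_2\rightarrow F$. Hence condition~\eqref{eqn:complete} never holds, and the necessity half of Theorem~\ref{thm:complete} gives compatibility of $\{S_i\}$.

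The step I expect to be the main obstacle is the exhaustiveness claim inside completeness. The algorithm stores only one node (one footprint) per instance and checks each newly derived footprint only against that stored one, so I must show that this ``one‑representative'' bookkeeping nevertheless witnesses \emph{every} pair $(S_1,S_2)$ that could satisfy~\eqref{eqn:complete} --- including pairs neither of whose members is the stored footprint. The resolution I have in mind rests on three ingredients: (i) footprints reduce to minimally sufficient inference sets (Lemmas~\ref{lm:linear} and~\ref{lm:perm}), so the comparisons are really comparisons of minimal inference sets; (ii) Lemma~\ref{lm:complete}, which makes the minimal inference set of each instance effectively canonical whenever the relevant constraints are jointly compatible, so that a single stored representative is without loss of generality; and (iii) monotonicity of $\rightarrow$, which transfers the conclusion from minimal subsets back to the arbitrary $S_1,S_2$ in the theorem. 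A secondary subtlety to nail down is that Lemma~\ref{lm:perm} indeed licenses treating any valid derivation order as a legal bottom‑up construction order, so that no minimal inference set is missed merely because its supporting rules are applied in a different sequence than the one the algorithm happens to follow.
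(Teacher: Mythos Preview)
Your proposal is correct and follows the same reduction to Theorem~\ref{thm:complete} that the paper uses; the paper's own argument is a single sentence (``This is a direct result of Theorem~\ref{thm:complete} since the solution method essentially implements the same checking process described there''), so you are supplying far more detail than the authors do. In particular, your footprint invariant and your explicit treatment of the ``one‑representative'' bookkeeping are exactly the points the paper sweeps under the phrase ``essentially implements the same checking process,'' and your identification of this as the crux is apt even though the paper never discharges it.
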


This is a direct result of Theorem \ref{thm:complete} since the solution method essentially implements the same checking process described there. 
Note, however, when the system returns compatible, it does not necessarily mean that there exists a physical configuration in the current situation, since the environment may also affect the compatibility. 
However, assuming that the influence of the environment is temporary, a system should be able to recover from an incompatible state. 
Further discussion on this is delayed to future work. 

\section{Results}

\subsection{Tasks Considered}

We introduced three types of UAV tasks that were considered in our experimental results.

\begin{itemize}
    \item \textbf{Monitoring task}: a target must be monitored within a close proximity by an air vehicle. The constraint for the monitoring task is the relative position ($F_R$) between the vehicle and target, and  the global position ($F_G$) of the target (since we do not have control over it). 
    \item \textbf{Centroid task}: a group of vehicles must maintain their centroid with respect to a specific location or target. The constraint is the centroid information  defined over either $2$ or $3$ robots (denoted by $F_{C_2}$ or $F_{C_3}$). The centroid information can be derived (inferred) from the global position information of the vehicles involved in the centroid task. 
    \item \textbf{Communication maintenance task}: a vehicle must maintain its position in between two other vehicles to maintain communication links. The constraint introduced here is the communication maintenance information (denoted by $F_M$)  that takes $3$ vehicles, which can be derived from the relative positions between vehicles $1$ and $3$, and $2$ and $3$, assuming that $3$ is the vehicle that is between the other two.
\end{itemize}

\subsection{Synthetic evaluation} 

In this experiment, we tested with the first two types of tasks only. 
The goal is to see how beneficial our approach is under resource constrained situations. 
We ran several experiments to determine the efficacy of our synergistic approach compared to a baseline. In the baseline, vehicles were assumed to be single-tasking, and hence did not accept new tasks once they were assigned to a task. 
In our experiments, we randomly generated sets of tasks for specific sets of agents, and attempted to assign tasks alternately between centroid and monitoring tasks to vehicles,
until every generated task has been attempted. 

In Fig. \ref{fig:1}, we set out to evaluate how our approach performs as the number of vehicles increases from $3$ to $7$.
In the figures, we varied the ratio between the numbers of the centroid and monitoring tasks, which was expected to have a noticeable effect on the performance. 
The top figure shows a configuration where the numbers of both tasks are half of the number of vehicles. 
This evaluates situations where the robotic resources are more abundant. 
In such cases, the performance between the two methods did not differ much even though our synergy based method still outperformed the baseline.
The middle figures shows a configuration with many more monitoring (simple) tasks,
in which we observed more task assignments for both methods. 
This illustrates situations where resources are more constrained. 
Our synergy method not only performed significantly better, with almost 1.5 times tasks assigned, but also more consistent: the random task generation had much less effect on our method with almost zero standard deviations. 
In the bottom figure, we increased the numbers of both tasks for task-saturated situations. 
We can see that the influence on our synergy method was still less apparent than that on the baseline. 
In every configuration examined, our approach resulted in more tasks assigned than the baseline approach.

 In Fig. \ref{fig:2} (top), 
 we studied the differences of the assignment process between the two approaches in more detail. 
 We generated a new task at each iteration of the assignment process and checked how many tasks were allocated in each iteration accumulatively. We can see that at very low numbers of tasks relative to the vehicles, the synergy and baseline task assignments assigned the same number of tasks, which was to be expected since we had many more vehicles available. 
 However,  as more tasks were added to the assignment, the synergy method pulled ahead of the baseline. 
 Also, our method consistently assigned more tasks in each iteration. 
 In Fig. \ref{fig:2} (bottom), we studied the influence of the ratio between the numbers of the monitoring tasks and centroid tasks in detail. 
 We randomly generated $25$ tasks with the ratio between the tasks gradually decreasing, so that initially we had $25$ monitoring and $0$ centroid task, and $0$ monitoring and $25$ centroid task at the end. 
 We can see that our method was affected a lot less by the baseline. The performance gap kept increasing as this ratio decreased.


\begin{figure}[t!]
    \centering
    \includegraphics[width=0.6\columnwidth]{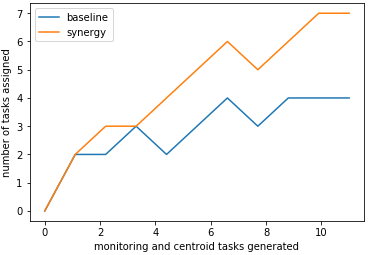}
    \includegraphics[width=0.6\columnwidth]{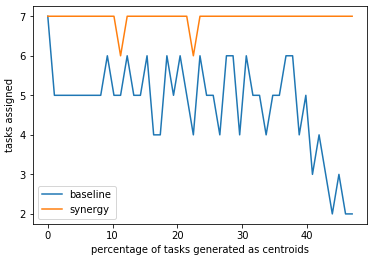}
    \caption{
    (top) Plot showing the number of tasks assigned by each approach as the number of tasks generated increases. (bottom) Plot showing the number of tasks assigned by each approach with $25$ tasks as the ratio between the monitoring and centroid tasks gradually decreases.}
    \vskip-10pt
    \label{fig:2}
\end{figure}



\begin{figure}
    \centering
    \begin{subfigure}
        \centering
        \includegraphics[width=0.48\columnwidth]{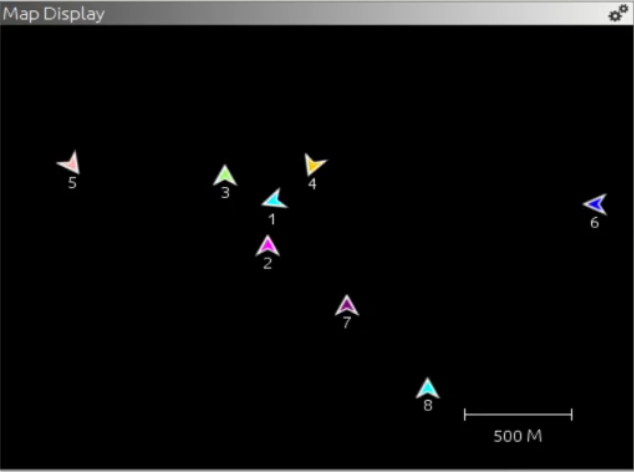}
    \end{subfigure}%
    ~ 
    \begin{subfigure}
        \centering
        \includegraphics[width=0.48\columnwidth]{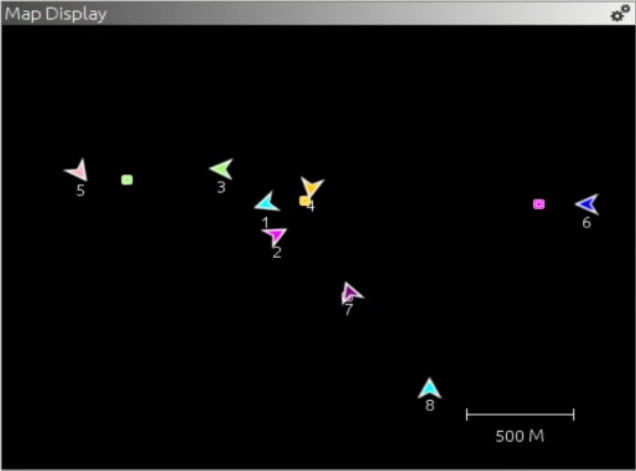}
    \end{subfigure}
    ~ 
    \begin{subfigure}
        \centering
        \includegraphics[width=0.475\columnwidth]{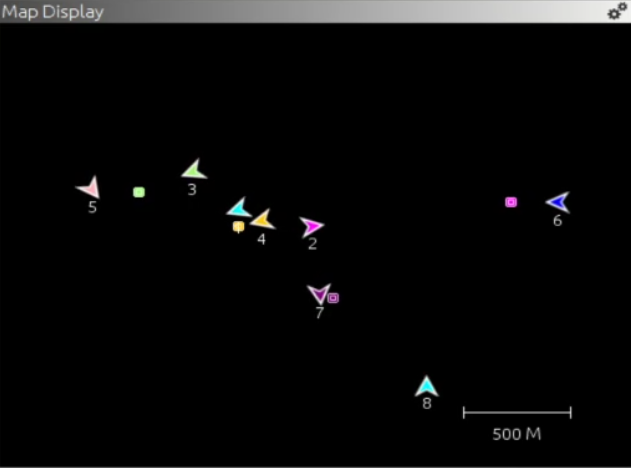}
    \end{subfigure}
    ~
        \begin{subfigure}
        \centering
        \includegraphics[width=0.471\columnwidth]{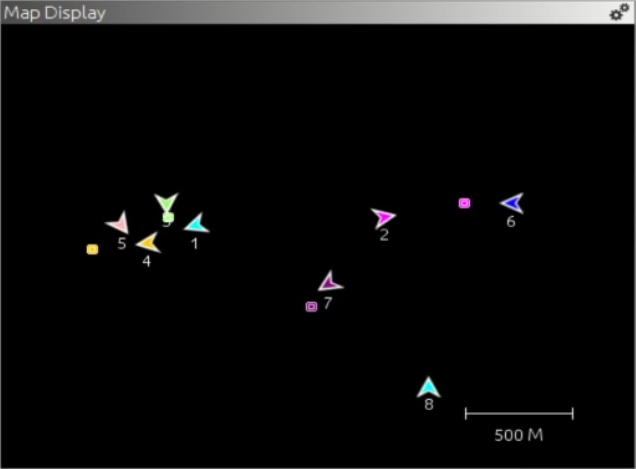}
    \end{subfigure}%
    ~ 
    \begin{subfigure}
        \centering
        \includegraphics[width=0.475\columnwidth]{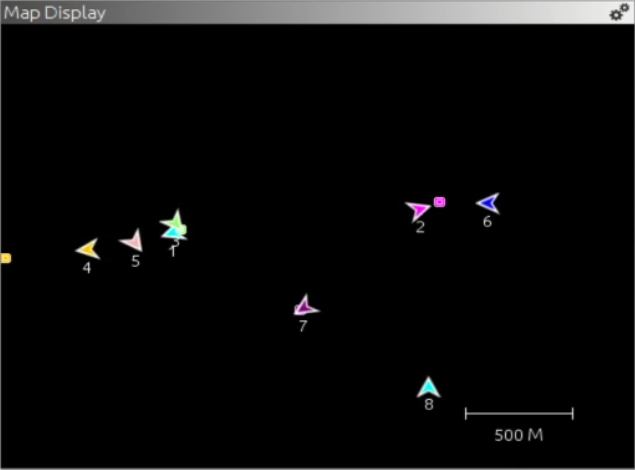}
    \end{subfigure}
    ~ 
    \begin{subfigure}
        \centering
        \includegraphics[width=0.47\columnwidth]{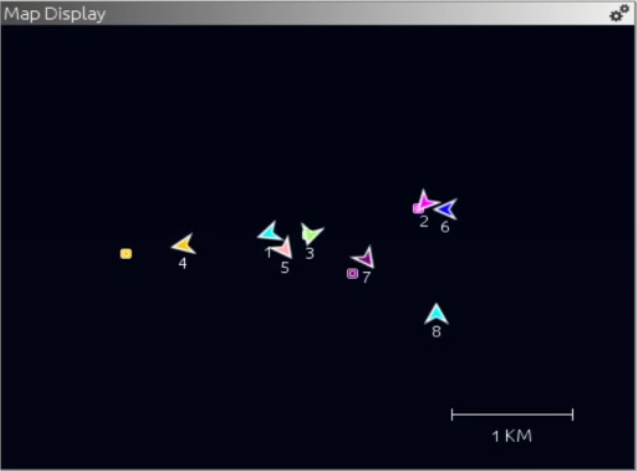}
    \end{subfigure}
     \vskip-5pt
    \caption{Screenshots of the simulated task, going from left to right, and top to bottom, as the simulation progresses. Each vehicle is represented by a chevron with a unique color and an ID label. Each controlled vehicle has its goal destination shown by a dot of its color. Vehicles 1, 5, 6, and 8 are not controlled. The first shows the initial setting, and the last figure shows the scenario in a different scale from others. 
    }
    \vskip-15pt
    \label{fig:4}
\end{figure}

\subsection{Simulation Scenario}

\subsubsection{Simulation Environment and Settings}
The OpenAMASE simulation environment was developed by the Air Force Research Lab\cite{afrlAMASE} as a testing ground for their aerial vehicle control software, UxAS\cite{afrlUxAS}. Together, these two pieces of software form the basis of the simulation environment. AMASE gives access to a GUI and simulates the vehicles over time, and UxAS handles the passing of all relevant messages to and from all modules of the software. Any module can subscribe to any type of message, and will then receive any message of that type sent by any other module. Our software uses the AirVehicleState, containing a ``heartbeat'' of information about each vehicle for each simulation tick, and the AirVehicleConfiguration, containing capability information about each vehicle, to decide where vehicles should move to.

\subsubsection{Simulation Result}
In this simulation, we tested our system on a realistic scenario involving a multi-vehicle convoy task with intruder detection. 
AMASE and UxAS can only handle up to twelve vehicles, and we have limited our simulation to $8$ controlled vehicles,  $2$ intruder vehicles, and $1$ static vehicle that simulates a control station. 
Fig. \ref{fig:4} shows snapshots from running the task. 
A convoy is formed (centered around vehicle $1$) and protected by three vehicles ($2, 3, 4$), which are assigned to a centroid task with vehicle $1$ as the centroid.
In the mean time, vehicle $7$ must maintain the communication between the convoy and a ground control station (vehicle $8$). 
As the convoy is moving towards its target, 
two intruders are detected and two of the vehicles ($2$ and $3$) that
are already assigned the centroid task take advantage of
the synergy between centroid and monitoring tasks by executing two tasks at the same time.

\section{Conclusions}
In this paper, we set out to address the problem of multi-tasking robots in multi-robot tasks. 
We observed that the key underlying challenge was to reason about the physical constraints that could be synergistically satisfied.
To address the challenge, we developed our method based on the information invariant theory and modeled constraints as information instances. 
Thereby, a formal and general framework to achieve multi-tasking robots was developed. 
We showed that our algorithm was sound and complete under our problem settings. 
Simulation  results  were  provided  to  show  the  effectiveness  of  our approach under resource-constrained situations and in handling challenging situations. 

\bibliographystyle{IEEEtran}
\bibliography{references}

\end{document}